\definecolor{linecolor}{gray}{.89} 
\definecolor{linecolor1}{gray}{.93} 
\definecolor{linecolor2}{gray}{.97} 
\begin{document}

\title{MCNet: Monotonic Calibration Networks for Expressive Uncertainty Calibration in Online Advertising}

\author{Quanyu Dai}
\affiliation{%
  \institution{Huawei Noah’s Ark Lab}
  \city{Shenzhen}
  \country{China}
  }
\email{daiquanyu@huawei.com}

\author{Jiaren Xiao}
\authornote{Corresponding author.}
\affiliation{%
  \institution{The HK PolyU}
  \city{Hong Kong}
  \country{China}
}
\email{xiaojr@connect.hku.hk}

\author{Zhaocheng Du}
\affiliation{%
  \institution{Huawei Noah's Ark Lab}
  \city{Shenzhen}
  \country{China}
}
\email{zhaochengdu@huawei.com}

\author{Jieming Zhu}
\affiliation{%
  \institution{Huawei Noah's Ark Lab}
  \city{Shenzhen}
  \country{China}
}
\email{jiemingzhu@ieee.org}

\author{Chengxiao Luo}
\affiliation{%
  \institution{Tsinghua University}
  \city{Shenzhen}
  \country{China}
}
\email{luocx21@mails.tsinghua.edu.cn}

\author{Xiao-Ming Wu}
\affiliation{%
  \institution{The HK PolyU}
  \city{Hong Kong SAR}
  \country{China}
}
\email{xiao-ming.wu@polyu.edu.hk}

\author{Zhenhua Dong}
\affiliation{%
  \institution{Huawei Noah’s Ark Lab}
  \city{Shenzhen}
  \country{China}
}
\email{dongzhenhua@huawei.com}

\renewcommand{\shortauthors}{Quanyu Dai et al.}

\begin{abstract}
  In online advertising, uncertainty calibration aims to adjust a ranking model's probability predictions to better approximate the true likelihood of an event, e.g., a click or a conversion. However, existing calibration approaches may lack the ability to effectively model complex nonlinear relations, consider context features, and achieve balanced performance across different data subsets. To tackle these challenges, we introduce a novel model called Monotonic Calibration Networks, featuring three key designs: a monotonic calibration function (MCF), an order-preserving regularizer, and a field-balance regularizer. The nonlinear MCF is capable of naturally modeling and universally approximating the intricate relations between uncalibrated predictions and the posterior probabilities, thus being much more expressive than existing methods. MCF can also integrate context features using a flexible model architecture, thereby achieving context awareness. The order-preserving and field-balance regularizers promote the monotonic relationship between adjacent bins and the balanced calibration performance on data subsets, respectively. Experimental results on both public and industrial datasets demonstrate the superior performance of our method in generating well-calibrated probability predictions.
\end{abstract}

\begin{CCSXML}
<ccs2012>
   <concept>
       <concept_id>10002951.10003227.10003447</concept_id>
       <concept_desc>Information systems~Computational advertising</concept_desc>
       <concept_significance>500</concept_significance>
       </concept>
 </ccs2012>
\end{CCSXML}

\ccsdesc[500]{Information systems~Computational advertising}

\keywords{Uncertainty Calibration, Online Advertising, Monotonic Networks}


\maketitle

\section{Introduction}\label{sec_intro}

In recent years, machine learning models have been extensively used to assist or even replace humans in making complex decisions in practical scenarios. 
Numerous applications, such as medical diagnosis~\cite{medical_diagnosis}, autonomous driving~\cite{self_driving}, and online advertising~\cite{MBCT}, have significant implications for people's safety or companies' economic incomes. 
Therefore, the performance of these models is of paramount importance. 
\textcolor{black}{Apart from classification accuracy or ranking performance, it is also crucial for the predicted probability to accurately reflect the true likelihood of an event~\cite{temp_scaling}.
However, modern neural networks often struggle to produce accurate probability estimate, despite excelling at classification or ranking tasks. This limitation, known as miscalibration, is affected by their model architectures and training (e.g., model capacity, normalization, and regularization)~\cite{temp_scaling,mixup}, significantly hindering their applications in real-world scenarios.} 

\textcolor{black}{In this paper, we focus on the scenario of online advertising. Typically, ECPM (Effective Cost Per Mille) serves as a key metric for measuring the revenue earned by the platform for every 1,000 ad impressions. In Pay-Per-Click advertising, ECPM is estimated as the product of the predicted Click-Through Rate (CTR) and the bidding price, i.e., $CTR \times bid \times 1000$, which is used for ad ranking. This requires a model to output the predicted CTR that precisely reflects the probability of a user clicking on a given advertisement, as it directly influences bidding results and, consequently, the platform's revenue. However, as highlighted in~\cite{SIR,NeuCalib}, the widely used deep ranking models heavily suffer from miscalibration, meaning the predicted CTR does not accurately represent the true probability.
}

To tackle this challenge, uncertainty calibration has been widely studied~\cite{SIR,NeuCalib}. The goal is to train a well-calibrated predictor that can generate predictions accurately reflecting the actual probability of an event~\cite{survey_uncertainty}. 
In this paper, we focus on the post-hoc calibration paradigm~\cite{zadrozny2001obtaining,SIR} due to its flexibility and widespread adoption in practice. Post-hoc methods fix the base predictor and learn a new calibration function to transform the predictions from the base predictor into calibrated probabilities. As a result, they can be conveniently used as a model-agnostic plug-and-play module placed on top of the base predictor in real recommender systems.  

Post-hoc methods mainly contain binning-based methods~\cite{zadrozny2001obtaining,zadrozny2002transforming}, scaling-based methods~\cite{platt1999probabilistic,kweon2022obtaining}, and hybrid methods~\cite{SIR,NeuCalib}. 
Binning-based methods, such as Histogram Binning~\cite{zadrozny2001obtaining} and Isotonic Regression~\cite{zadrozny2002transforming}, 
first divide data samples into multiple bins and then directly use the empirical posterior probability as the calibrated probability for each bin. Consequently, their calibration function is a piecewise constant function (see Fig.~\ref{fig_calib_func_demo}a), \textcolor{black}{making the samples within a bin lose their order information}. To alleviate this, scaling-based methods employ parametric functions for calibration. While preserving the order information among samples, they still face limitations in expressiveness due to their strong assumptions on the distributions of the class-conditional predicted scores, such as the Gaussian distribution in Platt Scaling~\cite{platt1999probabilistic}. 
Moreover, hybrid methods integrate binning-based and scaling-based methods into a unified solution to fully leverage their advantages. 
Representative methods, including SIR~\cite{SIR}, NeuCalib~\cite{NeuCalib}, and AdaCalib~\cite{AdaCalib}, typically learn a piecewise linear calibration function (see Fig.~\ref{fig_calib_func_demo} (b) and (c)). 
Therefore, they are incapable of learning a perfect calibration function when dealing with complex nonlinear relationships between uncalibrated scores and the true data distribution, which is often encountered in real-world applications. 
\textcolor{black}{We defer to Section~\ref{sec_analysis} for a more detailed and intuitive analysis.}

\textcolor{black}{In addition to limited expressiveness, existing post-hoc methods also struggle to adaptively capture varying miscalibration issues across contexts, and fail to achieve a balanced performance across different data fields.}
Specifically, NeuCalib introduces an additional module to capture context information, which heavily relies on an additional dataset. On the other hand, AdaCalib learns independent calibration functions for different data subsets, but it suffers from the data-efficiency issue. \textit{In summary, existing methods fall short in developing an effective calibration function due to several key issues: their limited expressiveness, lack of context-awareness, and failure to consider field-balance calibration performance}.

In this paper, we propose a novel hybrid approach, \textbf{M}onotonic \textbf{C}alibration \textbf{Net}work (\textbf{MCNet}), designed to address the aforementioned challenges in uncertainty calibration. Like other hybrid methods, MCNet comprises a binning phase for dividing samples into multiple bins and a scaling phase for learning the calibration function for each bin. The success of MCNet hinges on three key designs, including a monotonic calibration function (MCF), an order-preserving regularizer, and a field-balance regularizer. Firstly, the MCF serves as a powerful approximator for capturing the intricate relationship between uncalibrated scores and the true data distribution (see Fig.~\ref{fig_calib_func_demo} (d)). It is constructed using monotonic neural networks, ensuring the monotonically increasing property by enforcing the positivity of its derivative. Additionally, MCF can effectively model uncalibrated scores and context features with a flexible model architecture, thus achieving context-awareness efficiently.
Secondly, the order-preserving regularizer is intended to promote monotonicity between different bins by penalizing calibration functions of two adjacent bins that violate the relative order at the split point of the two bins.
The proposed calibration function, along with this regularizer, effectively address the first two issues.
Additionally, we introduce a field-balance regularizer to address the third issue, which penalizes the variance of the calibration performance across different fields. By incorporating this regularizer, MCNet can attain a more balanced calibration performance.

In summary, this paper makes the following contributions:
\begin{itemize}[leftmargin=0.5cm]
    \item We propose a novel hybird approach, Monotonic Calibration Networks, to achieve expressive, monotonically increasing, context-aware, and field-balanced uncertainty calibration.
    \item We design a monotonic calibration function, constructed using monotonic neural networks, to capture the complex relationship between uncalibrated scores and the true data distribution.
    \item We propose an order-preserving regularizer and a field-balance regularizer, which can significantly preserve order information and effectively promote balanced calibration performance among different fields, respectively.
    \item We conduct extensive experiments on two large-scale datasets: a public dataset AliExpress and a private industrial dataset from the advertising platform of Huawei browser, encompassing both click-through rate and conversion rate (CVR) prediction tasks. 
\end{itemize}

\section{Related Work}
\label{sec_back}
\subsection{Uncertainty Calibration}
\textcolor{black}{In real-world scenarios, the learned model
must handle data samples from diverse contexts with varying data distributions. Therefore, it is crucial for the calibration function to consider the contextual information to enable adaptive calibration across contexts.
Here, we broadly categorize existing uncertainty calibration approaches into two types: context-agnostic calibration~\cite{zadrozny2001obtaining, bayesian_binning, platt1999probabilistic, temp_scaling, SIR} and context-aware calibration~\cite{NeuCalib, AdaCalib}.}

\textbf{Context-agnostic calibration.} The uncalibrated score is the unique input of the calibration function. The binning-based methods, such as Histogram Binning~\cite{zadrozny2001obtaining} and Isotonic Regression~\cite{zadrozny2002transforming}, divide the samples into multiple bins, according to the sorted uncalibrated probabilities. In these non-parametric methods, the calibrated probability within each bin is the bin's posterior probability. Isotonic Regression merges adjacent bins to ensure the bins' posterior probabilities are non-decreasing. 
The scaling-based approaches, such as Platt Scaling~\cite{platt1999probabilistic} and Gamma Scaling~\cite{kweon2022obtaining}, propose parametric functions that map the uncalibrated scores to calibrated ones. These parametric functions assume that the class-conditional scores follow Gaussian distribution (Platt Scaling) or Gamma distribution (Gamma Scaling). Smoothed Isotonic Regression (SIR)~\cite{SIR} learns a monotonically increasing calibration function with isotonic regression and linear interpolation, thus jointly exploiting the strengths of the binning- and scaling-based methods.

\textbf{Context-aware calibration.} \textcolor{black}{In addition to the uncalibrated score, the context information, such as the field id denoting the source of data samples, has also been considered recently. 
NeuCalib, as the pioneering work, uses a univariate calibration function to transform the uncalibrated logits, and an auxiliary neural network that considers the sample features for context-adaptive calibration.
However, it is important to note that the calibration function itself in NeuCalib does not directly consider the context information.
AdaCalib~\cite{AdaCalib}, on the other hand, divides the validation set into several fields and learns an isotonic calibration function for each field using the field's posterior statistics. This approach, however, suffers from data efficiency issues due to the need for field-specific calibration. Both NeuCalib and AdaCalib adopt piecewise linear calibration functions, which may lack expressiveness when modeling the complex nonlinear relationships between the uncalibrated scores and the true data distribution.
Our approach, MCNet, addresses this issue by learning nonlinear calibration functions with monotonic neural networks. Additionally, MCNet can naturally achieve context awareness by incorporating the context feature as input to its monotonic calibration function. Moreover, MCNet is equipped with a novel field-balance regularizer to ensure more balanced calibration performance across various fields.}

\subsection{Monotonic Neural Networks}
\textcolor{black}{Monotonic neural networks are models that exhibit monotonicity with respect to some or all inputs~\cite{daniels2010monotone, runje2023constrained}. Pioneering methods like MIN-MAX networks~\cite{sill1997monotonic} achieve monotonicity through monotonic linear embeddings and max-min-pooling. Daniels and Velikova~\cite{daniels2010monotone} extended this approach to construct partially monotone networks that are monotonic with respect to a subset of inputs. However, these methods can be challenging to train, which limits their practical adoption.
Further, deep lattice networks (DLNs)~\cite{you2017deep} is designed to combine linear embeddings, lattices, and piecewise linear functions to build partially monotone models. Other recent work, such as UMNN~\cite{UMNN} and LMN~\cite{LMN}, ensures monotonicity by learning a function whose derivative is strictly positive.}
\textcolor{black}{Our work is inspired by monotonic neural networks, and is generally applicable to any implementation of monotonic neural networks.}

\section{Preliminaries}
\subsection{Problem Formulation}
In this paper, we study the problem of uncertainty calibration and formulate it from the perspective of binary classification. In binary classification, the aim is to predict the label $y\in\{0, 1\}$ of a data sample given its feature vector $\boldsymbol{x}\in\mathcal{X}$ by learning a predictor $g(\cdot)$ with a labeled training dataset. 
Then, given a data sample $\boldsymbol{x}$, the predicted probability of positive label is $\hat{p}=g(\boldsymbol{x})$, where the positive label refers to $1$ and the negative label refers to $0$. Both click-through rate prediction~\cite{BARS,LCD} and conversion rate prediction~\cite{su2021attention,DR} can be considered as binary classification task.

Currently, the most widely used predictors such as logistic regression~\cite{FTRL} and deep neural networks~\cite{FINAL} are not well calibrated~\cite{SIR,MBCT}. It means that the predicted probability $\hat{p}$ could not accurately represent the true probability of the event $E[Y|X]$ defined as
\begin{equation}
    E[Y|X=\boldsymbol{x}] = \lim\limits_{\epsilon\rightarrow 0^+} P(Y=1|\|X-\boldsymbol{x}\|\leq \epsilon).
\end{equation}

To tackle this problem, the post-hoc calibration, as a common paradigm, fixes the base predictor and learns a new mapping function to transform the raw model output $\hat{p}$ into the calibrated probability.
Specifically, the aim of uncertainty calibration in post-hoc calibration is to find a function $f^{*}$ that takes the predicted score from $g$ as input such that the calibration error could be minimized, i.e.,
\begin{equation}\label{calib_error}
    f^* = \mathop{\arg\min}_{f} \int_{\mathcal{X}} (E[Y|X=\boldsymbol{x}] - f(g(\boldsymbol{x})))^2d\boldsymbol{x}.
\end{equation}
In practice, the calibration function $f$ is learned based on a validation dataset $\mathcal{D}_\text{val}=\{(\boldsymbol{x}^{(i)},y^{(i)})\}_{i=1}^{N}$ with $N$ samples. It can be a parametric or non-parametric function.

To evaluate the calibration performance, Eq. (\ref{calib_error}) is not a feasible metric due to the unobservable event likelihood $E[Y|X=\boldsymbol{x}]$. A common practice is to utilize the empirical data to approximate the true likelihood and quantify the calibration performance. 
Many metrics have been proposed.
Predicted click over click (PCOC)~\cite{graepel2010web,he2014practical}, as the most commonly used metric, calculates the ratios of the average calibrated probability and the posterior probability as
\begin{equation}
    \text{PCOC} =  (\frac{1}{|{\mathcal{D}}|}\sum_{i=1}^{|\mathcal{D}|}p^{(i)}) / (\frac{1}{|{\mathcal{D}}|}\sum_{i=1}^{|\mathcal{D}|}y^{(i)}),
\end{equation}
where $\mathcal{D}=\{(\boldsymbol{x}^{(i)},y^{(i)})\}_{i=1}^{|\mathcal{D}|}$ is the test dataset. PCOC is insufficient to evaluate the calibration performance, since it neither considers the distribution of calibrated probabilities nor takes into account the field information.
To improve it, many more fine-grained metrics have been proposed. For example, calibration-$N$~\cite{SIR} and probability-level calibration error (Prob-ECE)~\cite{bayesian_binning} make use of the calibrated distribution based on binning method.
Further, a more reliable metric, i.e., field-level relative calibration error (F-RCE) is proposed~\cite{NeuCalib}, which is a weighted sum of the average bias of predictions in each data subset divided by the true outcomes as
\begin{equation}
 \text{F-RCE} = \frac{1}{|{\mathcal{D}}|}\sum_{c=1}^{|\mathcal{C}|} N_c \frac{|\sum_{i=1}^{|\mathcal{D}|}(y^{(i)}-p^{(i)})\mathbb{I}_{c}(c^{(i)})|}{\sum_{i=1}^{|\mathcal{D}|}(y^{(i)}+\epsilon)\mathbb{I}_{c}(c^{(i)})},
\end{equation}
where $c$ represents a specific field feature (which is usually a part of feature vector $\boldsymbol{x}$), $N_c$ is the number of samples of field $c$ with $\sum_{c=1}^{|\mathcal{C}|}N_c=|\mathcal{D}|$, $\epsilon$ is a small positive number (e.g., $\epsilon=0.01$)  to avoid division by zero, and $\mathbb{I}_{c}(\cdot)$ is an indicator function with value as 1 if the input is $c$ otherwise 0.

\subsection{Key Properties of Calibration}
To learn a well-performed calibration function, several key characteristics should be carefully considered and balanced.

 \textbf{Expressiveness}. The underlying data distribution in real scenarios can be highly complex, and the discrepancy between this distribution and the learned base predictor $g$ can be substantial. Therefore, the mapping function $f$ must be sufficiently expressive to facilitate the complex nonlinear transformations required to calibrate uncalibrated scores to the true data distribution.
 
 \textbf{Order-Preserving}. This suggests that the calibrated probability output by $f$ should preserve the order of the original scores produced by the uncalibrated model $g$. Typically, the base predictor $g$ is a strong deep neural network that excels in ranking tasks. For instance, sophisticated deep models are widely used in CTR prediction in industry~\cite{FuxiCTR,BARS}. This property allows us to improve the predicted probability while maintaining the ranking performance.
 
 \textbf{Context-Awareness}. In many applications, the trained model needs to handle data samples from various contexts (e.g., domains or categories) with significantly different distributions. The calibration function $f$ should incorporate context information to achieve adaptive calibration. \textcolor{black}{However, this property can conflict with the order-preserving property. Specifically, for samples in different contexts, ground-truth probabilities can differ despite the same uncalibrated scores due to different miscalibration issues, thus violating the order-preserving property. Consequently, careful balancing of these two properties is essential.}

 \textbf{Field-Balance}. It is crucial for the calibration model to perform consistently across different fields (e.g., domains or categories). Inconsistent performance can lead to various issues. For example, in an online advertising platform, if the calibration model $f$ overestimates the probabilities for samples from certain fields while underestimating them for others, it can result in overexposure in some fields and underexposure in others. This imbalance can cause unfairness and negatively impact the ad ecosystem.

\section{Our Approach}\label{sec_app}

In this section, we propose a novel hybrid approach, \textbf{M}onotonic \textbf{C}alibration \textbf{Net}works (\textbf{MCNet}), for uncertainty calibration. We begin by motivating the proposal of MCNet through an analysis of existing methods, followed by a detailed description of our method.

\subsection{Analysis and Motivation}\label{sec_analysis}
To begin with, we provide a discussion of some representative methods in post-hoc paradigm in terms of the key properties. 
We also intuitively demonstrates the calibration function of different methods with a toy example in Figure~\ref{fig_calib_func_demo}.

\textbf{Binning-based methods.}
The binning-based methods, such as Histogram Binning~\cite{zadrozny2001obtaining}, first divide the samples into multiple bins according to the sorted uncalibrated probabilities (in ascending order), and then obtain the calibrated probability within each bin by computing the bin's posterior probability. 
Suppose the samples are divided into $K$ bins as $\{[b_0, b_1), \cdots, [b_{k-1}, b_k), \cdots, [b_{K-1}, b_K)\}$.
Then, the calibration function can be formulated as
\begin{equation}\label{binning_func}
    f\big(g(\boldsymbol{x})\big) = \sum\limits_{k=1}^{K}\frac{\sum_{i=1}^N y^{(i)}\cdot\mathbb{I}_{[b_{k-1}, b_k)}\big(g(\boldsymbol{x}^{(i)})\big)}{\sum_{i=1}^N\mathbb{I}_{[b_{k-1}, b_k)}\big(g(\boldsymbol{x}^{(i)})\big)}\mathbb{I}_{[b_{k-1}, b_k)}\big(g(\boldsymbol{x})\big),
\end{equation}
where $\mathbb{I}_{[b_{k-1}, b_k)}(\cdot)$ is an indicator with value as 1 if the input falls into $[b_{k-1}, b_k)$ otherwise 0. 
Essentially, the calibration function is a piecewise constant function (see Fig.~\ref{fig_calib_func_demo}a).
It gives the same calibrated probability to all samples of the same bin, and thus loses the ranking information. 
Besides, they could not achieve context-awareness, and do not have any mechanism to enhance field-balance.

\textbf{Hybrid methods.}
The hybrid methods integrate the binning and scaling methods into a unified solution so as to make full use of their advantages, which have achieved state-of-the-art performance. Representative approaches, such as SIR~\cite{SIR}, NeuCalib~\cite{NeuCalib}, and AdaCalib~\cite{AdaCalib}, have piecewise linear calibration functions (see Fig.~\ref{fig_calib_func_demo}b and~\ref{fig_calib_func_demo}c). They first obtain multiple bins similarly as the binning-based methods, and then learn a linear calibration function for each bin. The calibration function can be formalized as
\begin{equation}\small
    f\big(g(\boldsymbol{x})\big) = \sum\limits_{k=1}^{K}\left[a_{k-1} + \big(g(\boldsymbol{x})-b_{k-1}\big) \frac{a_k-a_{k-1}}{b_k-b_{k-1}}\right]\mathbb{I}_{[b_{k-1}, b_k)}\big(g(x)\big),
\end{equation}
where $\{b_k\}_{k=0}^{K+1}$ are binning boundaries and $\{a_k\}_{k=0}^{K+1}$ are learned differently for hybrid models. 
\textcolor{black}{Specifically, SIR directly calculates $\{a_k\}_{k=0}^{K+1}$ from the statistics of the validation dataset, NeuCalib denotes them as learnable model parameters, and AdaCalib applies neural networks to learn them, respectively.} Although these existing methods gradually use more and more complicated functions to learn $\{a_k\}_{k=0}^{K+1}$, their calibration functions are essentially linear and have limited expressiveness. \textcolor{black}{Besides, NeuCalib and AdaCalib rely on similar additional order-preserving constraints to keep the order information, and do not consider field-balance.} 

\textbf{Motivation.}
Figure 1 provides an intuitive illustration of the calibration errors of
existing methods. As we can see, these methods, because of their limited expressiveness, are unable to achieve perfect uncertainty calibration when confronted with complex nonlinear transformation relations. This motivates us to design a more expressive calibration function with stronger modeling capabilities.

\subsection{Monotonic Calibration Networks}

As motivated above, we propose an expressive Monotonic Calibration Network that has the capacity to learn a perfect uncertainty calibration function, as shown in Figure~\ref{fig_calib_func_demo}d. 
MCNet is a hybrid method consisting of two phases, i.e., the binning phase and the scaling phase. In the binning phase, the validation samples in $\mathcal{D}_{val}$ are divided into $K$ bins with equal frequency, and the interval for the $k$-th bin is $[b_{k-1}, b_k)$. $b_0$ and $b_K$ are two pre-defined numbers to ensure that all samples can be assigned to a specific bin.
In the scaling phase, $K$ calibration functions are designed and learned for the $K$ bins, respectively.

\begin{figure}[t]
	\centering
	\includegraphics[width=0.98\columnwidth]{./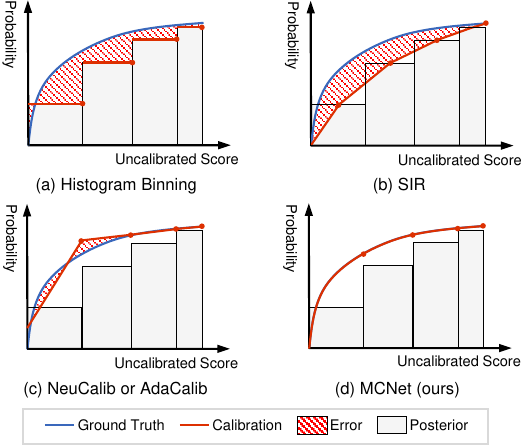}	
    \vspace{-0.5em}
	\caption{Illustration of different calibration functions.}
	\label{fig_calib_func_demo}
    \vspace{-0.5em}
\end{figure}

\textcolor{black}{Figure~\ref{fig_model} shows the architecture of our proposed Monotonic Calibration Network, which aims to achieve accurate calibration. MCNet relies on three key designs: a monotonic calibration function, an order-preserving regularizer, and a field-balance regularizer. Next, we provide a detailed introduction to each of these components.} 

\begin{figure*}[t]
	\centering
	\includegraphics[width=1.95\columnwidth]{./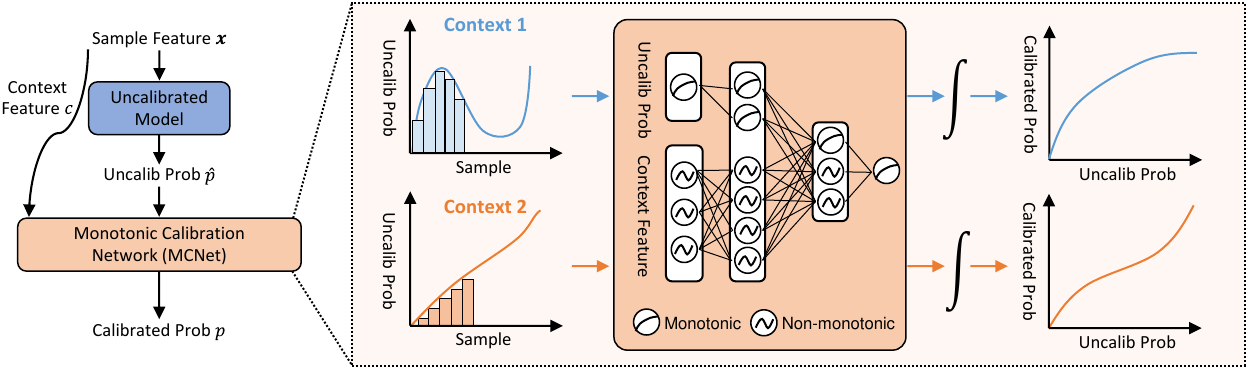}	
    \vspace{-0.8em}
	\caption{\textcolor{black}{Model architecture of MCNet. MCNet jointly models the uncalibrated score and the context feature to learn a monotonic calibration function. Given a specific context feature (e.g., context 1 and 2), MCNet generates the calibrated probabilities that are context-adaptive and monotonically increasing with the corresponding uncalibrated probabilities.}}
    \vspace{-0.5em}
	\label{fig_model}
\end{figure*}

\subsubsection{Monotonic Calibration Function}
\label{sec_mono_calib_func}
\textcolor{black}{The monotonic calibration function (MCF) is built upon monotonic neural networks (MNNs)~\cite{daniels2010monotone,UMNN,LMN}, which inherently possess the property of monotonicity and serve as strong approximators of the true data distribution. In this work, we implement the MCF based on an unconstrained MNN~\cite{UMNN}, which is designed with an architecture that ensures its derivative is strictly positive, thereby achieving the desired monotonicity.}

Denote a sample as $(\boldsymbol{x}, c, y)$ where $\boldsymbol{x}$ represents all features and $c$ ($c\in\mathcal{C}=\{c_1, c_2, \cdots, c_{|\mathcal{C}|}\}$) represents a specific field feature (which is usually a part of $\boldsymbol{x}$). Here, the field feature $c$ is used as the context feature without loss of generality, and can be readily replaced with any other features.
The proposed monotonic calibration function can jointly model the uncalibrated scores
and context features with a flexible model architecture, thus achieving context-awareness.
Compared with AdaCalib, which learns the calibration function for different fields independently, the MCF is shared by all fields within the same bin, making it more data-efficient.

Specifically, the calibration function MCF of MCNet for the $k$-th bin is formulated as follows: 
\textcolor{black}{
{\small
\begin{equation}\label{mcn_func_k}
    f^k\big(g(\boldsymbol{x}), c\big)
    = \underbrace{\int_0^{g(\boldsymbol{x})} f_1^k\big(t, h^k(c;\boldsymbol{\Phi}^k); \boldsymbol{\Theta}^k_1\big)dt}_{\text{\textbf{\textit{Integration Term}}}} + \underbrace{\vphantom{\int_0^{g(\boldsymbol{x})}}f_2^k\big(h^k(c;\boldsymbol{\Phi}^k); \boldsymbol{\Theta}^k_2\big)}_{\text{\textbf{\textit{Bias Term}}}},
\end{equation}
}}
where $f_1^k(\cdot;\boldsymbol{\Theta}^k_1)$, $f_2^k(\cdot;\boldsymbol{\Theta}^k_2)$, and $h^k(\cdot;\boldsymbol{\Phi}^k)$ are parametric functions, and $\boldsymbol{\Theta}^k_1$, $\boldsymbol{\Theta}^k_2$, and $\boldsymbol{\Phi}^k$ are the corresponding model parameters.
$h^k(\cdot;\boldsymbol{\Phi}^k)$ is an embedding function that transforms the input context feature id into embedding vectors. Both $f_1^k$ and $f_2^k$ can be implemented with any neural networks.

\textcolor{black}{
The MCF consists of an integration term and a bias term.
Most importantly, the integration term jointly considers the uncalibrated score $g(\boldsymbol{x})$ and the context feature $c$ in a natural way to achieve both monotonicity and context-awareness. 
Figure~\ref{fig_model} illustrates this term within the dashed box on the right side and demonstrates its properties intuitively. 
Specifically, $f_1^k$ is the derivative of $f^k$ with respect to the input $g(\boldsymbol{x})$, and it is designed to be a strictly positive parametric function, thus ensuring the function is monotonically increasing given the same context and contextually adaptive across different contexts.
In experiments, we leverage the sigmoid as the activation function for $f_1^k$ to ensure the positiveness of its outputs. 
The bias term $f_2^k$ is designed to further capture the contextual information, and does not impact the monotonicity with respect to $g(\boldsymbol{x})$.
It should be noted that the context feature $c$ is an optional input depending on whether contextual information plays an important role in the application.
}

Then, the overall calibration function of $K$ bins is formulated as
\begin{equation}
    f\big(g(\boldsymbol{x}), c\big)
    =\sum\limits_{k=1}^{K} f^k\big(g(x), c\big) \mathbb{I}_{[b_{k-1}, b_k)}\big(g(x)\big),
\end{equation}
where $\mathbb{I}_{[b_{k-1}, b_k)}(\cdot)$ is an indicator with value as 1 if the input falls into $[b_{k-1}, b_k)$ otherwise 0.


\begin{theorem}[\textbf{Expressiveness}]\label{theorem}
If the uncalibrated scores possess accurate order information and the ground truth calibration function is continuously differentiable, then the monotonic calibration function $f(\cdot)$ serves as a universal approximator of the ground truth function.
\end{theorem}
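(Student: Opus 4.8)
The plan is to reduce the claim to the classical universal approximation property of the integral-based monotonic network (UMNN) that underlies each per-bin MCF, after first using the two hypotheses to pin down the structure of the target function.

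First, I would translate the hypotheses into a concrete description of the ground truth. The assumption that the uncalibrated scores carry accurate order information means that the ranking induced by $g$ agrees with the ranking induced by the true likelihood $E[Y\mid X]$; combined with the existence of a ground truth calibration function, this guarantees that the target collapses to a well-defined function of the score alone for each fixed context $c$, i.e. $E[Y\mid X=\boldsymbol{x}] = \phi_c\big(g(\boldsymbol{x})\big)$, and that $\phi_c$ is monotonically non-decreasing. The differentiability hypothesis then makes each $\phi_c$ continuously differentiable, so $\phi_c'$ is continuous and non-negative on the bounded score domain $[b_0, b_K]$. This reduction identifies the right object to approximate: a univariate monotonic $C^1$ map, one per context value.

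Second, fixing a context $c$ and a bin $[b_{k-1}, b_k)$, I would compare $\phi_c$ with the MCF restricted to that bin. By construction $f^k(s,c) = \int_{0}^{s} f_1^k\big(t, h^k(c)\big)\,dt + f_2^k\big(h^k(c)\big)$ has strictly positive derivative $f_1^k$ in the score and an additive constant supplied by the bias term, which is exactly the UMNN form. The key step is to approximate the target's derivative: by the universal approximation theorem for feed-forward networks, the integrand $f_1^k$ (with its strictly positive output activation) can be chosen to approximate $\phi_c'$ uniformly to within any tolerance, after an arbitrarily small positive shift that accommodates points where $\phi_c'$ vanishes while keeping $f_1^k$ strictly positive and hence preserving monotonicity. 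Matching the bias term to $f_2^k\big(h^k(c)\big)\approx\phi_c(0)$ then fixes the integration constant. The resulting error is bounded elementarily by
\begin{equation*}
\big|f^k(s,c) - \phi_c(s)\big| \le \int_0^s \big|f_1^k(t, h^k(c)) - \phi_c'(t)\big|\,dt + \big|f_2^k(h^k(c)) - \phi_c(0)\big|,
\end{equation*}
where the first summand is at most the domain length times the uniform derivative error and the second is controlled directly. Because $\mathcal{C}$ is finite, this holds jointly over all contexts, and because the full calibration function is the indicator-weighted sum of the per-bin functions, approximating $\phi_c$ on each sub-interval yields uniform approximation of $\phi_c$ across the entire score range, establishing universal approximation.

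The main obstacle I expect is the first step rather than the approximation-theoretic core: carefully justifying that accurate order information together with differentiability legitimately reduces the target from a function on $\mathcal{X}$ to a monotonic $C^1$ function of the scalar score, including the treatment of ties where distinct feature vectors share a score and must therefore share a true probability for the reduction to be consistent. Once that reduction is secured, the positivity-preserving approximation of $\phi_c'$ and the integral error bound are routine, with the only mild care being the small shift that reconciles the strictly positive integrand with a target derivative that may vanish.
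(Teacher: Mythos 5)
Your proposal follows essentially the same route as the paper's proof: both reduce the claim to approximating the derivative of the ground truth calibration function by the strictly positive integrand $f_1^k$ via the classical universal approximation theorem for feedforward networks (Hornik et al.), then recover the calibration function by integration together with matching the bias term. Your version is in fact somewhat more careful than the paper's---notably in observing that monotonicity only gives a non-negative (not strictly positive) target derivative, which you repair with a small positive shift, and in writing out the explicit integral error bound---but the underlying argument is the same.
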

\textcolor{black}{This theorem suggests that MCNet is capable of learning the perfect nonlinear calibration function under certain conditions. We provide the proof of Theorem~\ref{theorem} in Appendix~\ref{proof}.}

To train MCNet, we quantify the instance-level calibration error with the negative log-likelihood loss, which is formulated as
\begin{equation}\label{logloss}
    \mathcal{L}_{logloss} = \frac{1}{N}\sum_{i=1}^{N} \left[ -y^{(i)}\log p^{(i)} - (1-y^{(i)})\log (1-p^{(i)}) \right],
\end{equation}
where $p^{(i)}=f\big(g(\boldsymbol{x}^{(i)}), c^{(i)}\big)$ is the calibrated probability obtained via the MCF.

\subsubsection{Order-Preserving Regularizer}\label{sec_order_pre}
The proposed monotonic calibration function (Eq. (\ref{mcn_func_k})) can ensure the monotonicity within samples of the same bin, while samples across different bins are not constrained. To address this issue, we design a regularizer to encourage the monotonicity between different bins as follows
\begin{equation}\label{order_loss}
    \mathcal{L}_{order} = \sum\limits_{i=1}^{|\mathcal{C}|}\sum\limits_{k=1}^{K-1} \max\left\{f^k\big(g(b_k), c_i\big)-f^{k+1}\big(g(b_{k}), c_i\big), 0\right\}.
\end{equation}
It penalizes calibration functions of the two adjacent bins that violate the monotonicity, i.e., the ending of the $k$-th calibration curve (which is located between $b_{k-1}$ and $b_k$) is greater than the beginning of the $(k+1)$-th one.
With this regularizer, MCNet can preserve the order information of all samples if no context feature is given, i.e., the output of $h^k$ is set to an all-zero vector. If the field id is used as the context feature for calibration, MCNet can also maintain the order-preserving property within each field, while it might achieve better calibration performance for each specific field due to the consideration of context features.

\subsubsection{Field-Balance Regularizer}
\label{sec_field_balance}
In many real scenarios, it is highly important to keep a balance of the calibration performance among different fields. For example, in online advertising platform, a balanced performance can enhance fairness of different bidders and improve the healthiness of the business ecosystem. 
PCOC and its variants are widely leveraged to quantify the calibration performance~\cite{SIR}. 
However, it can easily cause high variance issue if directly used as the evaluation metric in our regularizer due to its division operation. 
To avoid this issue, we design a new metric for quantifying the calibration performance for field $c$ as follows
\begin{equation}
    DIFF_c = \frac{1}{N} \sum_{i=1}^N\left[ f\big(g(\boldsymbol{x}^{(i)}), c^{(i)}\big) - y^{(i)} \right] \cdot\mathbb{I}_c\big(c^{(i)}\big),
\end{equation}
where $\mathbb{I}_c(\cdot)$ is an indicator function with value as 1 if the input is $c$ otherwise 0. Different from PCOC, it computes the diference between the calibrated probability and the posterior probability of the data through subtraction, which can quantify both overestimation and underestimation as PCOC.
To enhance the balance of calibration performance among different fields, we use the standard variance of DIFF of different fields as the field-balance regularizer 
\begin{equation}\label{balance_loss}
    \mathcal{L}_{balance} = \sqrt{\frac{\sum_{i=1}^{|\mathcal{C}|}(DIFF_i-\overline{DIFF})^2}{|\mathcal{C}|}},
\end{equation}
where $\overline{DIFF}$ is the mean of $\{DIFF_i\}_{i=1}^{|\mathcal{C}|}$.

The overall training loss for MCNet is formulated as
\begin{equation}\label{loss}
    \mathcal{L}_{MCNet} = \mathcal{L}_{logloss} + \beta\cdot\mathcal{L}_{order} + \alpha\cdot\mathcal{L}_{balance},
\end{equation}
where $\beta$ and $\alpha$ are hyperparameters to control the importance of the regularization terms.

\subsubsection{Discussion}
MCNet flexibly balances the properties of order-preserving and context-awareness. Specifically, the calibration function $f_k$ is strictly monotonically increasing with respect to the input uncalibrated score, thereby perfectly preserving the order information for samples of the same context within the same bin. Further, the relative order of samples across bins can be easily constrained using the order-preserving regularizer.
For samples of different contexts, the ground-truth probability can differ even with the same uncalibrated score, due to different miscalibration issues. Our MCNet enables context-adaptive calibration by naturally modeling the contextual information. Fine-grained context features will possibly benefit the calibration performance, while posing challenges for preserving the ranking information of the base predictor.
For scenarios where context features convey limited information, MCNet can directly disregard them and preserve the order information across contexts.

\subsection{Training Algorithm}
The proposed MCNet cannot be trivially trained with stochastic gradient descent (SGD) methods due to the existence of integration operation in Eq. (\ref{mcn_func_k}). To tackle this problem, we use Clenshaw-Curtis quadrature (CCQ)~\cite{gentleman1972implementing,rulesfast} to compute the forward and backward integration.
In CCQ, the integration is computed by constructing a polynomial approximation, involving $T$ forward operation of $f_1$. Thus, the complexity of MCNet is a constant times of a normal neural network positively depending on $T$. Luckily, the $T$ forward operations can be computed in parallel, making it time efficient. In the backward step, we integrate the gradient instead of computing the gradient of the integral to avoid storing additional results, making it memory efficient. 
Thus, CCQ enables the computation of gradients of MCNet efficiently and thus the optimization of it with SGD effectively.
Appendix~\ref{suppl:algorithm} provides the detailed training algorithms based on CCQ as well as the empirical analysis of time- and memory-efficiency.

\section{Experiments}
\label{sec_exp}
In this section, extensive experiments are conducted to investigate the following research questions (RQs).
\begin{itemize}[leftmargin=0.3cm]
	\item RQ1: How does MCNet perform on the calibration tasks compared with the state-of-the-art baseline approaches?
	\item RQ2: What are the effects of the auxiliary neural network and the field-balance regularizer on the performance of MCNet?
    \item RQ3: What are the strengths of nonlinear calibration functions learned by MCNet?
\end{itemize}
\textcolor{black}{In the Appendix, we further provide the hyperparameter sensitivity analysis (Appendix~\ref{sec_hyper}) and the analysis on MCNet's robustness against overfitting (Appendix~\ref{sec_epoch}).}
\subsection{Experimental Setup}
\label{exp_setup}
\subsubsection{Datasets}
\textcolor{black}{
The experiments are conducted on two large-scale datasets: one public dataset (AliExpress\footnote{\url{https://tianchi.aliyun.com/dataset/74690}}) and one private industrial dataset (Huawei Browser). Both datasets are split into three subsets for training, validation, and testing, respectively. For AliExpress, the 
field feature $c$ is set as the country where the data are collected. With such a categorical feature as the field information, AliExpress can be divided into 4 fields (i.e., 4 disjoint subsets), representing the 4 countries. The Huawei Browser dataset is extracted directly from the Huawei online advertising system with samples across 9 days. It is partitioned into 3 fields, indicating the 3 advertisement sources. More detailed descriptions are provided in Appendix~\ref{suppl:datasets}.
}

\begin{table*}[t]
\centering
\caption{Results on the AliExpress and Huawei Browser datasets. The highest results in each column are in boldface, the second-best values are underlined, and the values inside ``()'' represent the standard deviation across three different runs.}
\vspace{-0.8em}
\label{tab:performance}
\scalebox{0.95}{
\begin{tabular}{@{}l|ccc|ccc|ccc|ccc@{}}
\toprule
                    & \multicolumn{3}{c|}{CTR (AliExpress)}                                                 & \multicolumn{3}{c|}{CVR (AliExpress)}                             & \multicolumn{3}{c|}{CTR (Huawei Browser)}                                                 & \multicolumn{3}{c}{CVR (Huawei Browser)}                             \\ \midrule
Method              & PCOC            & F-RCE           & \multicolumn{1}{c|}{AUC}             & PCOC            & F-RCE            & AUC             & PCOC            & F-RCE           & \multicolumn{1}{c|}{AUC}             & PCOC            & F-RCE           & AUC             \\ \midrule

    Base                & 0.7727          & 16.10\%         & \multicolumn{1}{c|}{\underline{0.7216}}          & 1.4324          & 34.21\%          & \textbf{0.7892}          & 1.0750          & 3.18\%          & \multicolumn{1}{c|}{\textbf{0.8758}}          & 0.9814          & 1.60\%          & \textbf{0.8500} \\ \midrule
    Histogram Binning   & 0.8467          & 10.86\%         & \multicolumn{1}{c|}{0.7198}          & 1.1966          & 16.04\%          & 0.7865          & 0.9634          & 2.15\%          & \multicolumn{1}{c|}{0.8687}          & 1.0042          & \underline{0.80\%}          & 0.8493          \\
    \midrule
    Isotonic Regression & 0.8482          & 10.74\%         & \multicolumn{1}{c|}{0.7215}          & 1.1989          & 16.18\%          & \underline{0.7883}          & 0.9701          & 0.76\%          & \multicolumn{1}{c|}{\textbf{0.8758}}          & \underline{1.0037}          & 0.81\%          & 0.8499          \\ \midrule
    Platt Scaling    & \makecell{0.8441\\ \footnotesize{(0.0050)}} & \makecell{11.03\%\\ \footnotesize{(0.35\%)}} & \makecell{\underline{0.7216}\\ \footnotesize{(0.0000)}} & \makecell{1.3700\\ \footnotesize{(0.0024)}} & \makecell{29.39\%\\ \footnotesize{(0.18\%)}} & \makecell{\textbf{0.7892}\\ \footnotesize{(0.0000)}} & \makecell{0.9692\\ \footnotesize{(0.0069)}} & \makecell{\textbf{0.83\%}\\ \footnotesize{(0.17\%)}} & \makecell{\textbf{0.8758}\\ \footnotesize{(0.0000)}} & \makecell{\textbf{1.0030}\\ \footnotesize{(0.0006)}} & \makecell{0.95\%\\ \footnotesize{(0.05\%)}} & \makecell{\textbf{0.8500}\\ \footnotesize{(0.0000)}} \\
    \midrule
    Gaussian Scaling    & \makecell{0.8440\\ \footnotesize{(0.0101)}} & \makecell{11.05\%\\ \footnotesize{(0.72\%)}} & \makecell{\underline{0.7216}\\ \footnotesize{(0.0000)}} & \makecell{1.3092\\ \footnotesize{(0.0084)}} & \makecell{24.75\%\\ \footnotesize{(0.66\%)}} & \makecell{\textbf{0.7892}\\ \footnotesize{(0.0000)}} & \makecell{0.9638\\ \footnotesize{(0.0201)}} & \makecell{1.11\%\\ \footnotesize{(0.49\%)}} & \makecell{\textbf{0.8758}\\ \footnotesize{(0.0000)}} & \makecell{1.0045\\ \footnotesize{(0.0023)}} & \makecell{0.93\%\\ \footnotesize{(0.12\%)}} & \makecell{\textbf{0.8500}\\ \footnotesize{0.0000)}} \\
    \midrule
    Gamma Scaling    & \makecell{0.8484\\ \footnotesize{(0.0070)}} & \makecell{10.73\%\\ \footnotesize{(0.50\%)}} & \makecell{\underline{0.7216}\\ \footnotesize{(0.0000)}} & \makecell{1.2720\\ \footnotesize{(0.0043)}} & \makecell{21.77\%\\ \footnotesize{(0.34\%)}} & \makecell{\textbf{0.7892}\\ \footnotesize{(0.0000)}} & \makecell{0.9670\\ \footnotesize{(0.0238)}} & \makecell{1.10\%\\ \footnotesize{(0.55\%)}} & \makecell{\textbf{0.8758}\\ \footnotesize{(0.0000)}} & \makecell{1.0064\\ \footnotesize{(0.0039)}} & \makecell{1.13\%\\ \footnotesize{(0.20\%)}} & \makecell{\textbf{0.8500}\\ \footnotesize{(0.0000)}} \\
    \midrule
    SIR                 & 0.7821          & 15.44\%         & \multicolumn{1}{c|}{\underline{0.7216}}          & 1.1913          & 15.60\%          & \textbf{0.7892}          & 0.9448          & 1.51\%          & \multicolumn{1}{c|}{\textbf{0.8758}}          & \underline{1.0037}          & 0.82\%          & \textbf{0.8500}          \\
    \midrule
    NeuCalib    & \makecell{0.8472\\ \footnotesize{(0.0040)}} & \makecell{10.82\%\\ \footnotesize{(0.29\%)}} & \makecell{\underline{0.7216}\\ \footnotesize{(0.0000)}} & \makecell{1.1798\\ \footnotesize{(0.0231)}} & \makecell{14.66\%\\ \footnotesize{(1.86\%)}} & \makecell{\textbf{0.7892}\\ \footnotesize{(0.0000)}} & \makecell{\textbf{0.9902}\\ \footnotesize{(0.0052)}} & \makecell{1.27\%\\ \footnotesize{(0.05\%)}} & \makecell{\textbf{0.8758}\\ \footnotesize{(0.0000)}} & \makecell{1.0050\\ \footnotesize{(0.0028)}} & \makecell{0.87\%\\ \footnotesize{(0.17\%)}} & \makecell{\textbf{0.8500}\\ \footnotesize{(0.0000)}} \\
    \midrule
    AdaCalib    & \makecell{0.8599\\ \footnotesize{(0.0274)}} & \makecell{9.93\%\\ \footnotesize{(1.94\%)}}  & \makecell{\textbf{0.7217}\\ \footnotesize{(0.0000)}} & \makecell{1.1892\\ \footnotesize{(0.0047)}} & \makecell{15.21\%\\ \footnotesize{(0.39\%)}} & \makecell{0.7880\\ \footnotesize{(0.0001)}} & \makecell{0.9746\\ \footnotesize{(0.0367)}} & \makecell{1.14\%\\ \footnotesize{(0.95\%)}} & \makecell{\underline{0.8757}\\ \footnotesize{(0.0000)}} & \makecell{1.0071\\ \footnotesize{(0.0096)}} & \makecell{1.04\%\\ \footnotesize{(0.60\%)}} & \makecell{\underline{0.8499}\\ \footnotesize{(0.0000)}} \\
    \midrule
    MCNet-None (ours)  & \makecell{\textbf{0.9745}\\ \footnotesize{(0.0141)}} & \makecell{\textbf{1.77\%}\\ \footnotesize{(1.00\%)}}  & \makecell{0.7215\\ \footnotesize{(0.0001)}} & \makecell{\textbf{1.1094}\\ \footnotesize{(0.0036)}} & \makecell{\textbf{10.07\%}\\ \footnotesize{(0.24\%)}} & \makecell{\textbf{0.7892}\\ \footnotesize{(0.0000)}} & \makecell{0.9721\\ \footnotesize{(0.0021)}} & \makecell{\underline{0.92\%}\\ \footnotesize{(0.00\%)}} & \makecell{\textbf{0.8758}\\ \footnotesize{(0.0000)}} & \makecell{1.0059\\ \footnotesize{(0.0074)}} & \makecell{0.96\%\\ \footnotesize{(0.52\%)}} & \makecell{\underline{0.8499}\\ \footnotesize{(0.0002)}} \\
    \midrule
    MCNet-Field (ours) & \makecell{\underline{0.8642}\\ \footnotesize{(0.0032)}} & \makecell{\underline{9.63\%}\\ \footnotesize{(0.22\%)}}  & \makecell{0.7215\\ \footnotesize{(0.0001)}} & \makecell{\underline{1.1728}\\ \footnotesize{(0.0048)}} & \makecell{\underline{14.03\%}\\ \footnotesize{(0.29\%)}} & \makecell{0.7877\\ \footnotesize{(0.0001)}} & \makecell{\underline{0.9804}\\ \footnotesize{(0.0159)}} & \makecell{\textbf{0.83\%}\\ \footnotesize{(0.35\%)}} & \makecell{\underline{0.8757}\\ \footnotesize{(0.0000)}} & \makecell{1.0056\\ \footnotesize{(0.0012)}} & \makecell{\textbf{0.61\%}\\ \footnotesize{(0.10\%)}} & \makecell{\textbf{0.8500}\\ \footnotesize{(0.0000)}} \\
\bottomrule
\end{tabular}}
\vspace{-0.5em}
\end{table*}

\subsubsection{Baselines}
\textcolor{black}{We make comparisons with three categories of baselines. 
(1) Binning-based methods: Histogram Binning~\cite{zadrozny2001obtaining} and Isotonic Regression~\cite{zadrozny2002transforming}. 
(2) Scaling-based methods: Platt Scaling~\cite{platt1999probabilistic}, Gaussian Scaling~\cite{kweon2022obtaining}, and Gamma Scaling~\cite{kweon2022obtaining}.  
(3) Hybrid methods: SIR~\cite{SIR}, NeuCalib~\cite{NeuCalib}, and AdaCalib~\cite{AdaCalib}. More detailed decriptions of them are provided in Appendix~\ref{suppl:baselinse}.}

Our method, \textbf{MCNet}, is also a hybrid approach, including two variants: \textbf{MCNet-None} and \textbf{MCNet-Field}. MCNet-None is a variant without considering the context feature. The output of embedding function $h^k$ is set to an all-zero vector. MCNet-Field is a variant that takes the field information as the context feature. 
\subsubsection{Implementation Details}
\label{sec_imple_detail}
In our methods, the parametric functions $f_1(\cdot)$ and $f_2(\cdot)$ for each bin are implemented as multilayer perceptions (MLPs) with two 128-dimensional hidden layers. 
$h(\cdot)$ is an embedding lookup table with an embedding dimension of 128. The balance coefficient $\beta$ in Eq. (\ref{loss}) is set as 1. The proposed calibration models are trained using the Adam optimizer~\cite{Adam} with a batch size of 2048. The default learning rates for MCNet-None and MCNet-Field are 1e-5 and 1e-4, respectively. The base predictor $g(\cdot)$ is deep click-through rate prediction model~\cite{DSN-2017}. The baseline calibration approaches are implemented based on their papers. We set the number of bins to 20 for all methods that require binning. The base predictor is trained on the training set. All calibration methods are trained using the validation set and evaluated on the test set. We conduct both the CTR and CVR calibration on the AliExpress and Huawei Browser datasets. 
We choose PCOC as the field-agnostic calibration metric, F-RCE as the field-level calibration metric, and AUC score as the ranking metric. 

\subsection{Performance Study (RQ1)}
\label{sec_perf_sty}
As shown in Table~\ref{tab:performance}, compared with the base predictor (i.e., Base), all approaches have improved calibration metrics (i.e., PCOC and F-RCE) on both datasets. For example, on the AliExpress dataset, MCNet-None significantly reduces the F-RCE from 16.10\% to 1.77\% on the CTR task and from 34.21\% to 10.07\% on the CVR task. \textcolor{black}{In comparison with the baseline calibration approaches, except for the PCOC on Huawei Browser, our method MCNet yields the best PCOC and F-RCE scores in all other cases. As the base predictor has already achieved a PCOC score close to 1 (the optimal value) on Huawei Browser, there is not much room for further improvement by calibration methods, including our method MCNet.}

MCNet-None yields the best PCOC and F-RCE on both CTR and CVR tasks based on results of the paired-t-test compared with the best baseline, indicating MCNet-None is more expressive to approximate the posterior probabilities. On the Huawei Browser dataset, MCNet-None still obtains competitive calibration performance on both tasks. With the field information incorporated, MCNet-Field achieves the lowest F-RCE scores on both CTR and CVR tasks and the second-best PCOC on the CTR task.
Thus, MCNet-None and MCNet-Field are more favorable on AliExpress and Huawei Browser, respectively, and they can be chosen based on their performance on a specific task. Since MCNet-Field incorporates the field features, priority can be given to MCNet-Field if the field features are vital.
Moreover, on these two datasets, both MCNet-None and MCNet-Field maintain AUC scores the same as or close to those of the base predictor, thus largely preserving the original ranking of uncalibrated probabilities. Therefore, our methods, MCNet-None and MCNet-Field, excel in uncertainty calibration by learning nonlinear calibration functions with monotonic neural networks. 

\subsection{Ablation Study (RQ2)}

\begin{table*}[t]
\centering
\caption{Ablation study on the auxiliary neural network.}
\vspace{-0.8em}
\label{tab:ablation}
\scalebox{0.95}{
\begin{tabular}{@{}l|cccccc|cccccc@{}}
\toprule
               & \multicolumn{3}{c|}{CTR (AliExpress)}                                                 & \multicolumn{3}{c|}{CVR (AliExpress)}                            & \multicolumn{3}{c|}{CTR (Huawei Browser)}                                                 & \multicolumn{3}{c}{CVR (Huawei Browser)}                             \\ \midrule
Method             & PCOC            & F-RCE           & \multicolumn{1}{c|}{AUC}             & PCOC            & F-RCE           & AUC             & PCOC            & F-RCE           & \multicolumn{1}{c|}{AUC}             & PCOC            & F-RCE           & AUC             \\ \midrule
Base               & 0.7727          & 16.10\%         & \multicolumn{1}{c|}{0.7216}          & 1.4324          & 34.21\%         & 0.7892          & 1.0750          & 3.18\%          & \multicolumn{1}{c|}{0.8758}          & 0.9814          & 1.60\%          & 0.8500          \\ \midrule
NeuCalib-Aux       & 0.8966          & 7.33\%          & \multicolumn{1}{c|}{0.7258}          & 1.1175          & 9.58\%          & 0.7850          & 1.0334          & 1.24\%          & \multicolumn{1}{c|}{0.8814}          & 1.0060          & 0.62\%          & 0.8515          \\
AdaCalib-Aux       & 0.8920          & 7.65\%          & \multicolumn{1}{c|}{0.7244}          & 0.9597          & 2.73\%          & 0.7873          & 1.0042          & 0.22\%          & \multicolumn{1}{c|}{0.8847}          & 1.0187          & 2.00\%          & 0.8519          \\
MCNet-None-Aux  & 0.8857          & 8.10\%          & \multicolumn{1}{c|}{0.7261} & 1.0168          & 2.58\%          & 0.7894 & 1.0273          & 1.19\%          & \multicolumn{1}{c|}{0.8854} & 1.0061          & 1.40\%          & 0.8536 \\
MCNet-Field-Aux & 0.8908          & 7.73\%          & \multicolumn{1}{c|}{0.7254}          & 1.0122 & 1.94\% & 0.7870          & 1.0002 & 0.20\% & \multicolumn{1}{c|}{0.8829}          & 0.9949          & 0.60\%          & 0.8522          \\ \bottomrule
\end{tabular}}
\vspace{-0.5em}
\end{table*}

\subsubsection{Study on the auxiliary neural network}
\label{sec_aux_net}
An auxiliary neural network can be incorporated into MCNet as NeuCalib and AdaCalib, serving as an optional component to improve the ranking performance.
A detailed illustration is provided in Appendix~\ref{apx_aux_net}.

By comparing the results in Table~\ref{tab:ablation} and Table~\ref{tab:performance}, it is observed that the auxiliary network increases the AUC scores of MCNet-None and MCNet-Field in most cases. For instance, on the CTR task of Huawei Browser dataset, the relative AUC improvements obtained by the auxiliary network are 1.10\% for MCNet-None and 0.82\% for MCNet-Field, respectively. Similar observations can be found in NeuCalib and AdaCalib. Hence, the auxiliary network can improve the ranking ability of a calibration model. However, the calibration metrics (i.e., PCOC and F-RCE) sometimes get worse with the auxiliary network integrated. For example, worse values of PCOC and F-RCE can be seen in MCNet-None-Aux compared with MCNet-None. It reveals that the auxiliary network may have a negative impact on a model's calibration ability. \textcolor{black}{The reason is that, as a multi-layer perception, the auxiliary network also suffers from the miscalibration issue like the base predictor.}

\begin{table}[t]
\centering
\caption{Ablation study on the field-balance regularizer with CTR task on AliExpress. ``All'' and ``STD'' denote the overall PCOC and the PCOC standard deviation of the four fields, respectively.}
\label{tab:abla-field}
\vspace{-0.8em}
\scalebox{0.76}{
\begin{threeparttable}
\begin{tabular}{@{}c|c|c|c|cccccc@{}}
\toprule
\multirow{2}{*}{Model}               & \multirow{2}{*}{$\mathcal{L}_{b}$} & \multirow{2}{*}{F-RCE} & \multirow{2}{*}{AUC} & \multicolumn{6}{c}{PCOC}                                \\ \cmidrule(l){5-10} 
                                &                                          &                        &                      & All    & Field 0 & Field 1 & Field 2 & Field 3 & STD    \\ \midrule
\multirow{2}{*}{M-N}  & $\times$                                       & 0.63\%                 & 0.7214               & 0.991 & 0.997  & 0.915  & 0.942  & 0.930  & 3.56\% \\
                                & \checkmark                                      & 0.72\%                 & 0.7216               & 1.002 & 1.007  & 0.938  & 0.966  & 0.956  & 2.93\% \\ \midrule
\multirow{2}{*}{M-F} & $\times$                                       & 9.56\%                 & 0.7216               & 0.865 & 0.867  & 0.878  & 0.828  & 0.837  & 2.36\% \\
                                & \checkmark                                      & 8.99\%                 & 0.7213               & 0.873 & 0.875  & 0.864  & 0.849  & 0.833  & 1.81\% \\ \midrule
\multirow{2}{*}{N}       & $\times$                                       & 10.54\%                & 0.7216               & 0.851 & 0.855  & 0.803  & 0.822  & 0.798  & 2.59\% \\
                                & \checkmark                                      & 10.02\%                & 0.7216               & 0.859 & 0.862  & 0.811  & 0.831  & 0.806  & 2.53\% \\ \midrule
\multirow{2}{*}{A}       & $\times$                                       & 7.51\%                 & 0.7217               & 0.894 & 0.894  & 0.934  & 0.877  & 0.934  & 2.83\% \\
                                & \checkmark                                      & 8.19\%                 & 0.7216               & 0.885 & 0.885  & 0.891  & 0.874  & 0.853  & 1.67\% \\ \bottomrule
\end{tabular}
\begin{tablenotes}
    \item * M-N: MCNet-None, M-F: MCNet-Field, N: NeuCalib, A: AdaCalib, $\mathcal{L}_{b}$:$\mathcal{L}_{balance}$.
\end{tablenotes}
\end{threeparttable}}
\vspace{-0.5em}
\end{table}

\subsubsection{Study on the field-balance regularizer}
Table~\ref{tab:abla-field} shows the results with the field-balance regularizer (i.e., $\mathcal{L}_{balance}$, see Section~\ref{sec_field_balance}) applied. A field's PCOC reveals the calibration performance on this field. It can be seen that the field-balance regularizer reduces the PCOC standard deviations of all calibration approaches. For example, the PCOC standard deviations of MCNet-None and MCNet-Field decrease from 3.56\% to 2.93\% and from 2.36\% to 1.81\%, respectively. Such observations demonstrate that the field-balance regularizer can improve the balance of calibration performance on different fields, thus promoting the fairness of the product ecosystem. In addition, under the field-balance regularizer, the overall PCOC is improved in most cases. Moreover, the field-balance regularizer achieves a reduced or comparable F-RCE, and maintains the AUC score. Thus, the field-balance regularizer can keep the calibration and ranking metrics while promoting the field-balance.

\subsection{Calibration Function Analysis (RQ3)}
Figure~\ref{fig_calib_func} shows the calibration functions of MCNet-None and three baselines, i.e., SIR, NeuCalib, and AdaCalib. These calibration functions are learned using the validation set. The black bar is the posterior probability of test samples in each bin. The bin number is set as 10. As introduced in Section~\ref{sec_analysis}, the piecewise linear calibration function of SIR is constructed directly using the posterior probabilities of the validation set. Consequently, the two ends of each line are within the bins. The ordinate value of each endpoint is the bin's posterior probability. By comparing the calibration curve of SIR and the posterior statistics of the test set, it can be observed that the validation and test sets have distinct distributions of posterior probabilities. In comparison with the baseline approaches, the calibration function of MCNet-None can more closely approximate the posterior probabilities of the test set. This point can also be supported by the preferable PCOC and F-RCE of MCNet-None. Note that, in MCNet-None, the gap between two adjacent calibration curves is caused by the order-preserving regularizer (see Section~\ref{sec_order_pre}). With such an order-preserving regularizer, the calibrated probabilities are non-decreasing, thus keeping the original ranking of uncalibrated probabilities.
\begin{figure}[t]
	\centering
	\includegraphics[width=0.98\linewidth]{./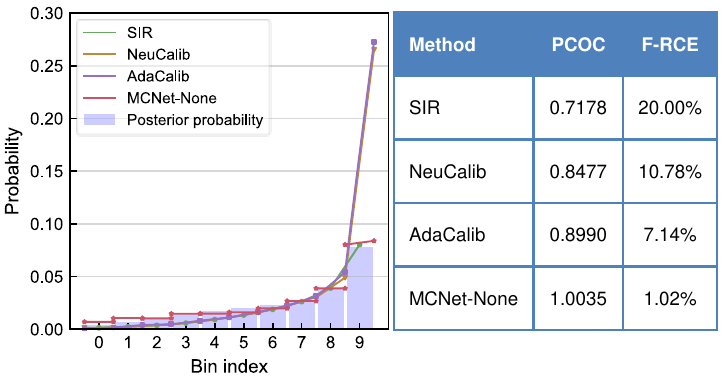}	
    \vspace{-1.5em}
	\caption{Visualization of calibration functions.}
    \vspace{-0.5em}
	\label{fig_calib_func}
\end{figure}

\section{Conclusion}
\label{sec_con}
We have proposed a novel hybrid method, Monotonic Calibration Networks (MCNet), to tackle the current challenges in uncertainty calibration \textcolor{black}{for online advertising}. 
MCNet is equipped with three key designs: a monotonic calibration function (MCF), an order-preserving regularizer, and a field-balance regularier.
The proposed MCF is capable of learning complex nonlinear relations by leveraging expressive monotonic neural networks. In addition, its flexible architecture enables efficient joint modeling of uncalibrated scores and context features, facilitating effective context-awareness.
The two proposed regularizers further enhance MCNet by improving the monotonicity increasing property for preserving order information and the field-balanced calibration performance, respectively. Finally, extensive experiments are conducted on both public and industrial datasets to demonstrate the superiority of MCNet in CTR and CVR tasks. 

\vfill\eject
\bibliographystyle{ACM-Reference-Format}
\balance
\bibliography{calibration}

\newpage
\appendix
\appendices

\section{Model Details}\label{suppl:training}
\subsection{Proof of Theorem~\ref{theorem}}\label{proof}
\setcounter{theorem}{0}
\begin{theorem}[\textbf{Expressiveness}]
If the uncalibrated scores possess accurate order information and the ground truth calibration function is continuously differentiable, then the monotonic calibration function $f(\cdot)$ serves as a universal approximator of the ground truth function.
\end{theorem}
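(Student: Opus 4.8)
The plan is to reduce the claim to the universal approximation property of the unconstrained monotonic neural network on which the MCF is built. First I would make the two hypotheses precise. The phrase ``accurate order information'' should be read as: the ranking induced by the uncalibrated scores $g(\boldsymbol{x})$ agrees with the ranking induced by the true event likelihoods $E[Y|X=\boldsymbol{x}]$, so that there is a well-defined univariate map $\phi$ with $E[Y|X=\boldsymbol{x}] = \phi\big(g(\boldsymbol{x})\big)$ that is monotonically non-decreasing. Combined with the assumption that the ground-truth calibration function is continuously differentiable, this yields a $\phi \in C^1$ with $\phi' \geq 0$ on the compact domain $[b_0, b_K]$ of attainable scores. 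This step is where both hypotheses of the theorem enter, and it is a modeling observation rather than a calculation: monotonicity is exactly what the positivity constraint inside the MCF is designed to capture.

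The core of the argument is a per-bin matching of derivative and offset. Fix a context $c$ and a bin $[b_{k-1}, b_k)$. The MCF has the form $f^k\big(g(\boldsymbol{x}),c\big) = \int_0^{g(\boldsymbol{x})} f_1^k(t,\cdot)\,dt + f_2^k(\cdot)$, where $f_1^k$ is forced strictly positive by the sigmoid activation. Writing $\phi(s) = \phi(b_{k-1}) + \int_{b_{k-1}}^{s}\phi'(t)\,dt$ for $s\in[b_{k-1},b_k]$, I would approximate the continuous, non-negative function $\phi'$ by the positive network $f_1^k$, and absorb the residual constant $\phi(b_{k-1}) - \int_0^{b_{k-1}} f_1^k(t,\cdot)\,dt$ into the free bias term $f_2^k$. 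Here the positivity of $\phi'$ (from monotonicity) is precisely what makes it reachable by the range of the sigmoid-activated output, so the constraint imposed by the architecture does not obstruct the approximation.

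Next I would invoke the standard universal approximation theorem for feedforward networks to obtain, for any $\epsilon > 0$, a network $f_1^k$ with $\sup_{t\in[b_{k-1},b_k]} |f_1^k(t,\cdot) - \phi'(t)| < \epsilon$. The integral error is then controlled uniformly in $s$ by $\big|\int_{b_{k-1}}^{s} (f_1^k - \phi')\,dt\big| \le \epsilon\,(b_k - b_{k-1})$, so each $f^k$ approximates $\phi$ on its bin to within a constant multiple of $\epsilon$ once $f_2^k$ is chosen to cancel the offset. Since the bins are finite in number and cover the compact domain, and the overall calibrator is the indicator-weighted sum $f\big(g(\boldsymbol{x}),c\big) = \sum_{k=1}^{K} f^k\big(g(\boldsymbol{x}),c\big)\,\mathbb{I}_{[b_{k-1}, b_k)}\big(g(\boldsymbol{x})\big)$, taking the maximum of the per-bin errors yields a uniform approximation of $\phi$ by $f$ on all of $[b_0, b_K]$, which is the asserted universal approximation. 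The context-aware case follows the same template with $c$ held fixed, since $f_1^k$ and $f_2^k$ depend continuously on the embedding $h^k(c;\boldsymbol{\Phi}^k)$.

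The main obstacle is not the analytic estimate, which is routine once the derivative is matched, but formalizing ``accurate order information'' so that $\phi$ is genuinely a monotone function of the score alone (so that a calibrator univariate in $g(\boldsymbol{x})$ can represent it) and confirming that the sigmoid-induced positivity is compatible with approximating $\phi'$ — a compatibility that holds exactly because monotonicity forces $\phi' \geq 0$.
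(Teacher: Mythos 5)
Your proposal is correct and follows essentially the same route as the paper's own proof: both arguments deduce from ``accurate order information'' that the ground-truth calibration function has a (non-negative) derivative, invoke the universal approximation theorem for feedforward networks so that the positive-output $f_1^k$ can approximate that derivative, and recover $f^k$ by integration with $f_2^k$ absorbing the offset. Your version is simply a more careful elaboration (explicit error propagation through the integral, per-bin-to-global assembly, and the $\phi' \geq 0$ versus strictly positive distinction) of the paper's terser argument.
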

\begin{proof}
If the uncalibrated scores possess accurate order information, then the true probability is monotonically increasing with respect to the uncalibrated scores. This implies that the ground truth calibration function has a strictly positive derivative. The derivative of the proposed calibration function $f^k(\hat{p}, c)$ is given by
$\frac{d}{d\hat{p}}f^k(\hat{p}, c) = f_1^k(\hat{p}, h^k(c))$,
which is strictly positive, as guaranteed by the output activation. 
\textcolor{black}{Based on~\cite{hornik1989multilayer}, a multi-layer feedforward network with sufficient hidden units is a universal approximator for our problem. Thus, the function $f_1^k(\cdot)$ can be implemented with multi-layer feedforward networks with sufficient hidden units, allowing it to readily accommodate any positive continuous functions.}
Hence, $f^k(\hat{p}, c)$ can precisely match the groundtruth calibration function of the $k$-th bin.
\end{proof}

\subsection{Algorithm and Complexity}\label{suppl:algorithm}
\textcolor{black}{Algorithm~\ref{forward-algorithm} and Algorithm~\ref{backward-algorithm} show the detailed training procedures for both forward and backward integration with Clenshaw-Curtis quadrature (CCQ)~\cite{rulesfast,UMNN}, respectively. 
In CCQ, the integration is computed by constructing a polynomial approximation, involving $T$ (set to 50 empirically) forward operation of $f_1$. Thus, the complexity of MCNet is a constant times of a normal neural network positively depending on $T$. Luckily, the $T$ forward operations can be computed in parallel, making it time efficient. In the backward step, we integrate the gradient instead of computing the gradient of the integral to avoid storing additional results, making it memory efficient.}

In Table~\ref{tab:time-memory}, we provide the training time per epoch and the GPU memory consumption of closely-related baselines and our methods, verifying that MCNet is both time- and memory-efficient. 
\textcolor{black}{Although the training time of MCNet is several times longer than the baseline methods, it is acceptable in practical applications because the calibration dataset is usually much smaller than the training dataset. In addition, the inference time per sample of MCNet-None ($T=50$), AdaCalib, and NeuCalib are 12.2e-3, 2.02e-3, and 0.48e-3 milliseconds, respectively. All methods have low inference delay, thereby meeting the requirements for online real-time inference in real-world scenarios.}

\subsection{Auxiliary Neural Network}\label{apx_aux_net}

An auxiliary neural network can be incorporated into MCNet as NeuCalib and AdaCalib, serving as an optional component to improve the ranking performance. 
The auxiliary neural network takes the sample feature vectors as inputs. When the auxiliary network is used,  MCNet takes uncalibrated logit $\hat{l}$ as the input and outputs calibrated logit $l$. The calibrated probability is calculated by adding the outputs of the calibration function and the auxiliary neural network. 
\textcolor{black}{Figure~\ref{fig_model_aux} illustrates the model architecture with an additional auxiliary network incorporated into the MCNet model.
Note that the auxiliary network relies on an independent validation set. If the validation set for calibration is a subset of the training set, the auxiliary network should be removed. Table~\ref{tab:ablation} reports the calibration and ranking metrics with an auxiliary neural network (Aux) incorporated into each model, including AdaCalib, NeuCalib, MCNet-None, and MCNet-Field. The auxiliary neural network is implemented as a 2-layer MLP. The experimental settings are the same as those of Section~\ref{sec_perf_sty}.}

\begin{figure}[ht]
	\centering
	\includegraphics[width=7.0cm]{./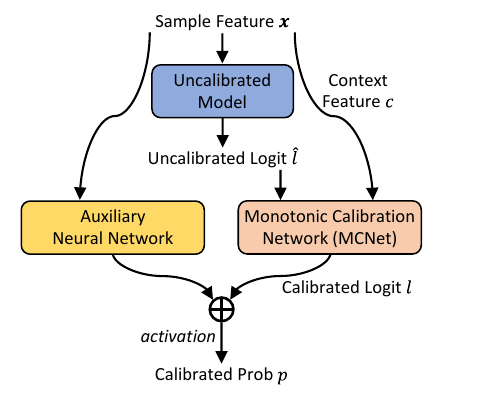}
	\caption{Model architecture with an additional auxiliary network.}
	\label{fig_model_aux}
\end{figure}

\begin{table}[ht]
\centering
\caption{Training time per epoch (min) and GPU memory consumption (MiB).}
\label{tab:time-memory}
\begin{tabular}{@{}cc|cc|cc@{}}
\toprule
                                &       & \multicolumn{2}{c|}{AliExpress} & \multicolumn{2}{c}{Huawei Browser} \\ \midrule
\multicolumn{1}{c|}{Method}     & $ T $ & Time & Memory & Time   & Memory   \\ \midrule
\multicolumn{1}{c|}{NeuCalib}   & -     & 0.63              & 1,457        & 23.0               & 23,855        \\
\multicolumn{1}{c|}{AdaCalib}   & -     & 1.3              & 1,497        & 21.7               & 24,263        \\
\multicolumn{1}{c|}{MCNet-None} & 10    & 5.5              & 1,515        & 69.6               & 24,889        \\
\multicolumn{1}{c|}{MCNet-None} & 20    & 5.5              & 1,515        & 79.0               & 24,838        \\
\multicolumn{1}{c|}{MCNet-None} & 50    & 5.4              & 1,509        & 74.6               & 24,824        \\ \bottomrule
\end{tabular}
\end{table}

 \begin{algorithm*}[h]
 \caption{Forward Integration for MCNet with Clenshaw-Curtis quadrature}
 \label{forward-algorithm}
    \SetKwInOut{Input}{Input}
    \SetKwInOut{Output}{Output}
    \SetKwInOut{Hyperparameters}{Hyperparameters}

    \Input{
    $p$: The superior integration bounds, i.e., the uncalibrated score $g(x)$ in Eq. (\ref{mcn_func_k}). \\
    ${h}$: The vector that representing embeddings of the input feature id, and $h$ is denoted as $h^k$ for the $k$-th bin.}
    \Output{
    $f$: The integral of $\int_{0}^{p} f_1(t;{h};\boldsymbol{\Phi})\mathrm{d}t$, and $f$ is denoted as $f^k$ for the $k$-th bin.\\}
    \Hyperparameters{
    $f_1$: A derivable function $\mathbb{R}\rightarrow \mathbb{R}$ with model parameters as $\boldsymbol{\Phi}$.\\
    $T$: The number of integration steps.}
    ${w}$, ${\delta}_p$ = \Call{compute\_clenshaw\_curtis\_weights}{$T$} \qquad $//$ Compute Clenshaw-Curtis weights and evaluation steps \\
    $f = 0$ \\
    
    \For{$i=1,\dots,T$}{
        $p_i$ = $p_0 +  \frac{1}{2}(p - p_0)({\delta}_p[i] + 1)$ \qquad $//$ Compute the next point to evaluate \\
        $\delta_f = f_1(p_i; {h};\boldsymbol{\Phi})$ \\
        $f$ = $f + {w}[i] \delta_f$ \\
        }
    $f = \frac{f}{2} (p-p_0)$ \\
    \Return $f$ \\
 \end{algorithm*}

 \begin{algorithm*}[h]
 \caption{Backward Integration for MCNet with Clenshaw-Curtis quadrature}
 \label{backward-algorithm}
    \SetKwInOut{Input}{Input}
    \SetKwInOut{Output}{Output}
    \SetKwInOut{Hyperparameters}{Hyperparameters}

    \Input{
     $p$: The superior integration bounds. \\
     ${h}$: The vector that representing embeddings of the input feature id, and $h$ is denoted as $h^k$ for the $k$-th bin. \\
     $\nabla_{out}:$ The derivatives of the loss function with respect to $\int_{0}^{p} f_1(t;{h};\boldsymbol{\Phi})\mathrm{d}t$ for all $p$.\\}
    \Output{
     $\nabla_{\boldsymbol{\Phi}}$: The gradient of $\int_{0}^{p} f_1(t;{h};\boldsymbol{\Phi})\mathrm{d}t$ with respect to the parameters $\boldsymbol{\Phi}$ of $f_1$.\\
     $\nabla_{{h}}$: The gradient of $\int_{0}^{p} f_1(t;{h};\boldsymbol{\Phi})\mathrm{d}t$ with respect to feature embeddings ${h}$.\\}
    \Hyperparameters{
     $f_1$: A derivable function $\mathbb{R}\rightarrow \mathbb{R}$ with model parameters as $\boldsymbol{\Phi}$.\\
     $T$: The number of integration steps.}
    ${w}$, ${\delta}_p$ = \Call{compute\_clenshaw\_curtis\_weights}{$T$} \qquad $//$ Compute Clenshaw-Curtis weights and evaluation steps \\
    $f, \nabla_{\boldsymbol{\Phi}}, \nabla_{{h}} = 0, 0, 0$ \\
    
    \For{$i=1,\dots,T$}{
        $p_i$ = $p_0 + \frac{1}{2} (p - p_0)({\delta}_p[i] + 1)$ \qquad $//$ Compute the next point to evaluate \\
        $\delta_F$ =$f_1(p_i; {h}; \boldsymbol{\Phi})$ \\
        $\delta_{\nabla_{{h}}}$ = $ \sum_{j=1}^{B} \nabla_{{{h}}^j}\left(\delta_f^j\right) \nabla_{out}^j (p^j-p_0^j)$ \qquad $//$ Sum up for all samples of the batch the gradients with respect to inputs ${h}$ \\
        $\delta_{\nabla_{\boldsymbol{\Phi}}}$ = $ \sum_{j=1}^{B} \nabla_{\boldsymbol{\Phi}}\left(\delta_f^j\right) \nabla_{out}^j (p^j-p_0^j)$ \qquad $//$ Sum up for all samples of the batch the gradients with respect to parameters $\boldsymbol{\Phi}$ \\
        $\nabla_{{h}}$ = $\nabla_{{h}} + {w}[i] \delta_{\nabla_{{h}}}$ \\
        $\nabla_{\boldsymbol{\Phi}}$ = $\nabla_{\boldsymbol{\Phi}} + {w}[i] \delta_{\nabla_{\boldsymbol{\Phi}}}$ \\
        }
    \Return $\nabla_{\boldsymbol{\Phi}}$, $\nabla_{{h}}$ \\
\end{algorithm*}

\section{More Experiments}

\subsection{Experimental Datasets}\label{suppl:datasets}
The experiments are conducted on two real-world datasets: one public dataset (AliExpress\footnote{\url{https://tianchi.aliyun.com/dataset/74690}}) and one private industrial dataset (Huawei Browser). 
We provide the detailed statistics of experimental datasets, i.e., AliExpress and Huawei Browser, in Table~\ref{tab:dataset-statistics}.
For AliExpress, the provided training and test sets are split along the time sequence. Since the timestamp information is not available in the original training set, following AdaCalib~\cite{AdaCalib}, we split the original training set to be a new training set and a validation set with a proportion of 3:1. Field feature $c$ (see Section~\ref{sec_mono_calib_func}) is set as the country where the data are collected. With such a categorical feature as the field information, AliExpress can be divided into 4 fields (i.e., 4 disjoint subsets), representing the 4 countries. 
The Huawei Browser dataset is extracted directly from the Huawei online advertising system, which has samples across 9 days. To simulate the real scenario, we split this dataset by the date, i.e., the first 7 days for training, the 8th day for validation, and the 9th day for testing. Huawei Browser dataset is partitioned into 3 fields, indicating the 3 advertisement sources.

\begin{table*}[t]
\setlength\tabcolsep{1pt}
\centering
\caption{Statistics of the AliExpress and Huawei Browser datasets.}
\vspace{-0.8em}
\label{tab:dataset-statistics}
\scalebox{1.0}{
\begin{tabular}{@{}cc|ccc|ccc|ccc@{}}
\toprule
                                                                                               &          & \multicolumn{3}{c|}{Training}           & \multicolumn{3}{c|}{Validation}       & \multicolumn{3}{c}{Test}              \\ \midrule
\multicolumn{1}{c|}{}                                                                          & Field ID & \#Impression & \#Click   & \#Conversion & \#Impression & \#Click & \#Conversion & \#Impression & \#Click & \#Conversion \\ \midrule
\multicolumn{1}{c|}{\multirow{5}{*}{AliExpress}}                                               & 0        & 8,355,111    & 166,433   & 5,742        & 2,785,326    & 55,176  & 1,913        & 5,115,069    & 123,544 & 4,191        \\
\multicolumn{1}{c|}{}                                                                          & 1        & 219,651      & 5,451     & 311          & 73,068       & 1,828   & 158          & 132,654      & 3,960   & 234          \\
\multicolumn{1}{c|}{}                                                                          & 2        & 445,559      & 9,992     & 514          & 148,164      & 3,240   & 108          & 253,662      & 7,052   & 415          \\
\multicolumn{1}{c|}{}                                                                          & 3        & 98,100       & 2,112     & 117          & 32,915       & 739     & 41           & 57,916       & 1,551   & 71           \\ \cmidrule(l){2-11} 
\multicolumn{1}{c|}{}                                                                          & All      & 9,118,421    & 183,988   & 6,684        & 3,039,473    & 60,983  & 2,220        & 5,559,301    & 136,107 & 4,911        \\ \midrule \midrule
\multicolumn{1}{c|}{\multirow{4}{*}{\begin{tabular}[c]{@{}c@{}}Huawei\\ Browser\end{tabular}}} & 0        & 273,615,005  & 1,114,264 & 515,301      & 39,978,390   & 147,305 & 66,452       & 40,125,618   & 147,868 & 65,944       \\
\multicolumn{1}{c|}{}                                                                          & 1        & 52,593,839   & 466,101   & 281,313      & 6,759,591    & 62,973  & 37,404       & 6,941,224    & 63,692  & 38,088       \\
\multicolumn{1}{c|}{}                                                                          & 2        & 16,687,407   & 245,595   & 163,380      & 2,037,187    & 32,509  & 21,974       & 2,191,654    & 33,340  & 22,146       \\ \cmidrule(l){2-11} 
\multicolumn{1}{c|}{}                                                                          & All      & 342,896,251  & 1,825,960 & 959,994      & 48,775,168   & 242,787 & 125,830      & 49,258,496   & 244,900 & 126,178      \\ \bottomrule
\end{tabular}}
\vspace{-0.5em}
\end{table*}


\subsection{Baseline Methods}\label{suppl:baselinse}

We make comparisons with three categories of baselines. 
(1) \textbf{Binning-based methods}: \textbf{Histogram Binning}~\cite{zadrozny2001obtaining} partitions the ranked uncalibrated scores into multiple bins, and assigns the calibrated probability of each bin to be the bin's posterior probability. \textbf{Isotonic Regression}~\cite{zadrozny2002transforming} improves over Histogram Binning by merging the adjacent bins to ensure the bin's posterior probability keeps increasing. 
(2) \textbf{Scaling-based methods}: They design parametric calibration functions with the assumption that the class-conditional scores follow the Gaussian distribution (\textbf{Platt Scaling}~\cite{platt1999probabilistic} and \textbf{Gaussian Scaling}~\cite{kweon2022obtaining}) or Gamma distribution (\textbf{Gamma Scaling}~\cite{kweon2022obtaining}).  
(3) \textbf{Hybrid methods}: These methods borrow ideas from both the binning- and scaling-based methods. \textbf{SIR}~\cite{SIR} constructs calibration functions using isotonic regression and linear interpolation. \textbf{NeuCalib}~\cite{NeuCalib} computes the calibrated probabilities with a linear calibration function and a field-aware auxiliary neural network. \textbf{AdaCalib}~\cite{AdaCalib} learns one linear calibration function for each field using the field's posterior statistics.

\subsection{Hyperparameter Sensitivity Analysis}
\label{sec_hyper}

\begin{figure}[t]
	\centering
	\includegraphics[width=1.0\linewidth]{./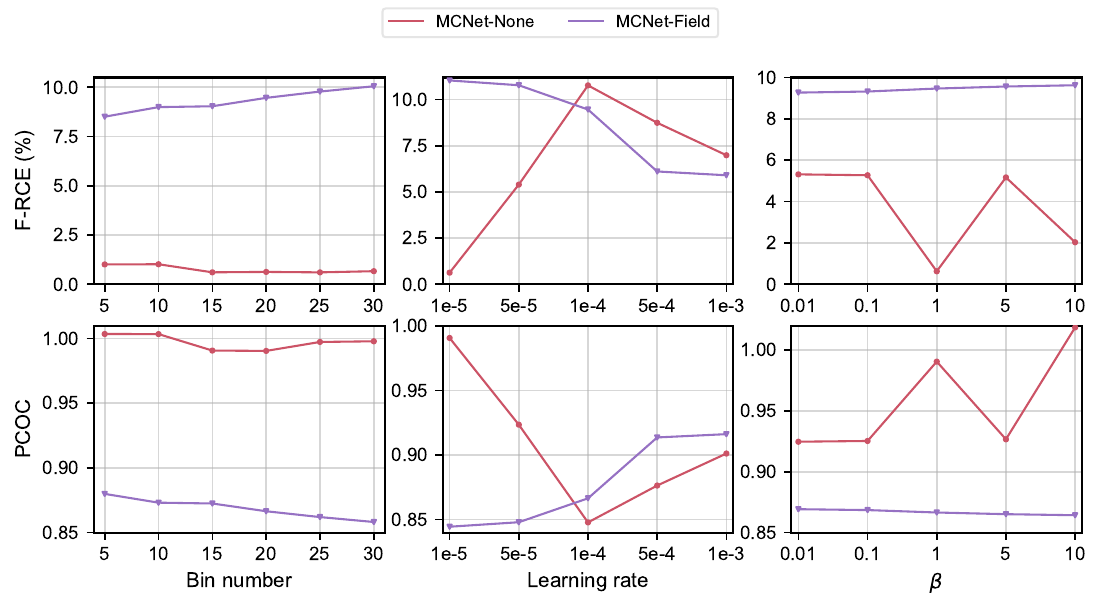}
 \vspace{-1.1em}
	\caption{
 Calibration metrics across bin numbers, learning rates, and balance coefficients $\beta$.}
 \vspace{-0.8em}
	\label{fig_sensi}
\end{figure}

\textcolor{black}{Figure~\ref{fig_sensi} shows the calibration metrics of MCNet under three key hyperparameters, i.e., bin number, learning rate, and balance coefficient $\beta$. 
Bin number is the number of bins that the validation set is divided. In each bin, a monotonic calibration function is learned for calibration. The learning rate is used to train the calibration models. $\beta$ is the balance coefficient in the overall loss function (i.e., Eq. (\ref{loss})). The experiments are conducted on the AliExpress datasets with the F-RCE and PCOC of the CTR task reported. When investigating one hyperparameter, the remaining hyperparameters are set as the default values described in Section~\ref{sec_imple_detail}. }

\textcolor{black}{Compared with MCNet-None, MCNet-Field is more sensitive to bin number while less sensitive to the learning rate and the balance coefficient $\beta$. When varying the bin number, the F-RCE and PCOC of MCNet-None only slightly change. The calibration metrics of MCNet-Field become worse under larger bin numbers while getting improved when the learning rate increases. The optimal learning rates are 1e-5 and 1e-3 for MCNet-None and MCNet-Field, respectively. When changing the value of $\beta$, the calibration metrics of MCNet-Field remain stable, while those of MCNet-None fluctuate. The optimal setting of $\beta$ is 1.0 for MCNet-None.} 

\subsection{Model Robustness against Overfitting}
\label{sec_epoch}
\textcolor{black}{MCNet is designed to be robust against overfitting via three strategies: 1) constraining the calibration function of each bin to be monotonic regarding the uncalibrated scores, 2) applying $L_2$ regularization and employing a small number of training epochs, 3) taking simple inputs, i.e., only the uncalibrated scores (MCNet-None) or together with the context features (MCNet-Field)~\cite{ying2019overview}. The auxiliary neural network is an optional module of MCNet to enhance the ranking performance. To avoid overfitting, the auxiliary network is implemented as a simple 2-layer MLP. Table~\ref{tab:overfitting-PCOC} and Table~\ref{tab:overfitting-AUC} report the PCOC and AUC scores on the CVR task under every 2 training epochs (10 epochs in total), demonstrating that a training epoch within the range of 2 to 10 has a negligible impact on the final calibration and ranking performance. Therefore, MCNet is robust against overfitting, even with the auxiliary network incorporated.}

\begin{table}[]
\centering
\caption{PCOC under every 2 training epochs (10 epochs in total).}
\label{tab:overfitting-PCOC}
\scalebox{0.9}{\begin{tabular}{@{}c|cl|l@{}}
\toprule
Dataset                                                                   & \multicolumn{2}{c|}{Method}         & \multicolumn{1}{c}{PCOC (2-10 epochs)}     \\ \midrule
\multirow{2}{*}{AliExpress}                                               & \multicolumn{2}{c|}{MCNet-None}     & 1.0999 | 1.1127 | 1.1089 | 1.1089 | 1.1165 \\
                                                                          & \multicolumn{2}{c|}{MCNet-None-Aux} & 1.0168 | 0.9895 | 0.9837 | 0.9753 | 0.9678 \\ \midrule
\multirow{2}{*}{\begin{tabular}[c]{@{}c@{}}Huawei\\ Browser\end{tabular}} & \multicolumn{2}{c|}{MCNet-None}     & 0.9995 | 0.9997 | 0.9998 | 1.0001 | 1.0003 \\
                                                                          & \multicolumn{2}{c|}{MCNet-None-Aux} & 0.9988 | 1.0020 | 1.0110 | 0.9976 | 1.0061 \\ \bottomrule
\end{tabular}}
\end{table}

\begin{table}[h]
\centering
\caption{AUC under every 2 training epochs (10 epochs in total).}
\label{tab:overfitting-AUC}
\scalebox{0.9}{
\begin{tabular}{@{}c|cl|l@{}}
\toprule
Dataset                                                                   & \multicolumn{2}{c|}{Method}         & \multicolumn{1}{c}{AUC (2-10 epochs)}      \\ \midrule
\multirow{2}{*}{AliExpress}                                               & \multicolumn{2}{c|}{MCNet-None}     & 0.7892 | 0.7892 | 0.7892 | 0.7892 | 0.7892 \\
                                                                          & \multicolumn{2}{c|}{MCNet-None-Aux} & 0.7894 | 0.7893 | 0.7893 | 0.7892 | 0.7890 \\ \midrule
\multirow{2}{*}{\begin{tabular}[c]{@{}c@{}}Huawei\\ Browser\end{tabular}} & \multicolumn{2}{c|}{MCNet-None}     & 0.8497 | 0.8497 | 0.8497 | 0.8497 | 0.8497 \\
                                                                          & \multicolumn{2}{c|}{MCNet-None-Aux} & 0.8549 | 0.8552 | 0.8543 | 0.8535 | 0.8536 \\ \bottomrule
\end{tabular}}
\end{table}

\end{document}